\title{Unraveling Batch Normalization for Realistic Test-Time Adaptation} 
\author{
    %Authors
    % All authors must be in the same font size and format.
    Zixian Su\textsuperscript{\rm 1, \rm 2},
    Jingwei Guo\textsuperscript{\rm 1, \rm 2},
    Kai Yao\textsuperscript{\rm 1, \rm 2},
        Xi Yang\textsuperscript{\rm 1}\thanks{Corresponding author.},
            Qiufeng Wang\textsuperscript{\rm 1},
        Kaizhu Huang\textsuperscript{\rm 3}\footnotemark[1]
}
\newtheorem{proposition}{Proposition}
\begin{document}
\maketitle

\begin{abstract}
While recent test-time adaptations exhibit efficacy by adjusting batch normalization to narrow domain disparities, their effectiveness diminishes with realistic mini-batches due to inaccurate target estimation. As previous attempts merely introduce source statistics to mitigate this issue, the fundamental problem of inaccurate target estimation still persists, leaving the intrinsic test-time domain shifts unresolved. This paper delves into the problem of mini-batch degradation. By unraveling batch normalization, we discover that the inexact target statistics largely stem from the substantially reduced class diversity in batch. Drawing upon this insight, we introduce a straightforward tool, Test-time Exponential Moving Average (TEMA), to bridge the class diversity gap between training and testing batches. Importantly, our TEMA adaptively extends the scope of typical methods beyond the current batch to incorporate a diverse set of class information, which in turn boosts an accurate target estimation. Built upon this foundation, we further design a novel layer-wise rectification strategy to consistently promote test-time performance. Our proposed method enjoys a unique advantage as it requires neither training nor tuning parameters, offering a truly hassle-free solution. It significantly enhances model robustness against shifted domains and maintains resilience in diverse real-world scenarios with various  batch sizes, achieving state-of-the-art performance on several major benchmarks. Code is available at \url{https://github.com/kiwi12138/RealisticTTA}.
\end{abstract}

\section{Introduction}

Confronted with unseen environments, the effectiveness of deep neural networks~\cite{krizhevsky2017imagenet,he2016deep,deeplab} often suffers a decline due to domain shift~\cite{domainshift,domainshift2} --- an incongruity between the training (source) and testing (target) domains. To address this, Test-Time Adaptation (TTA)~\cite{wang2021tent,TTT} serves as a practical paradigm that enables pre-trained models to dynamically adapt with test streams.
Recent TTA research mainly resolves around exploring batch normalization (BN)~\cite{ioffe2015batch}. A crucial reason for this focus lies in the internal relation between normalization statistics and domain characteristics. Specifically, BN uses source statistics for normalization during training; however, applying the same parameters to differing target domains can lead to erratic results. Instead, normalizing with the estimated statistics from the target domain promotes appropriate standardization and superior generalization across diverse environments.

% ,scale=0.25
\begin{figure}[t]
  \centering
  \includegraphics[clip,trim=35.5cm 5cm 0cm 25.5cm,width=0.75\textwidth]
  % {AnonymousSubmission/LaTeX/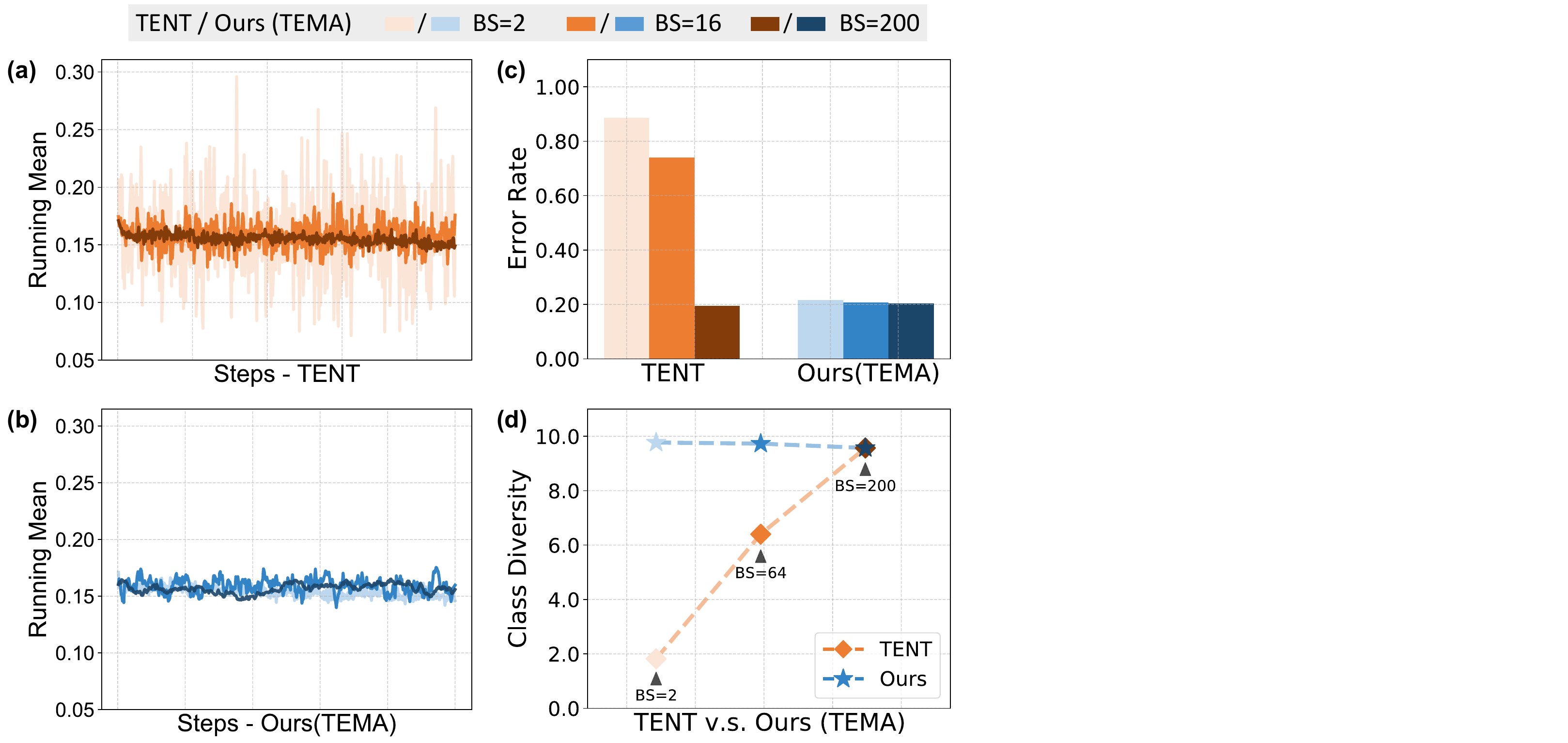}
  {banner0815.pdf}
\caption{(a)(b) Running mean statistics of one specific channel in the last BN layer during inference. (c) Performance comparison between TENT~\cite{wang2021tent} and Ours under different batch sizes.  (d) Quantified class diversity of TENT v.s. Ours under different batch sizes. Same color denotes same evaluation setting. As can be seen, class diversity plays a vital role in test-time performance.}
  \label{fig:banner}
\end{figure}

Despite their excellent performance under chosen settings, existing TTA methods face essential challenges when deployed in realistic settings~\cite{wang2022continual,lim2023ttn}. Most of them necessitate extensive target data, commonly  large batches. While in practice, mini-batches are often preferred, which may produce erroneous statistics and cause model degradation (see Figure~\ref{fig:banner}~(c)).
% may obscure the intrinsic domain shifts with incomplete information and cause model degradation
Although some attempts~\cite{schneider2020adaptBN,you2021test} have been made to alleviate this limitation by incorporating source statistics, they merely cover up the inaccurate target information, leaving the persistent domain shifts unresolved. These observations naturally prompt us to ask: 
\textit{What exactly causes the inaccurate target statistics? How can we mitigate these inaccuracies to alleviate domain shift? 
}

This paper explores these concerns by unraveling batch normalization.
A common belief in TTA research is that the accuracy of target estimation largely hinges on the quantity of batch samples~\cite{zhao2022delta,DUA,khurana2021sita}. On initial observation, it could be inferred that mini-batches, with fewer samples, would naturally produce flawed target statistics. However, this viewpoint is reductive.  As depicted in Figure~\ref{fig:banner}~(b)(c), the inaccurate estimation and performance degradation is efficiently reduced by increasing the class diversity  within mini-batches.
Our investigation indicates that \textit{the quality of target estimation is not only influenced by the sample count but, more fundamentally, by the diversity within those samples.} 
Distortions arising from this discrepancy, which is independent of domain traits, could further obscure the true domain shifts and compromise model performance.

To tackle the discrepancies between training and testing phases in class diversity, we introduce a straightforward remedy, termed Test-time Exponential Moving Average (TEMA). 
TEMA utilises past statistics from previous batches, using a weighted average to enrich the diversity of class information in the current batch's statistics. However, as the data scope of TEMA expands (anticipated increase in class diversity), the model starts to lose its precise grasp on the current batch's data, leading to unstable and disrupted normalization. This gives rise to a trade-off: 
while concrete local context may suppress information richness, a focus on the global aspect compromises information timeliness. Recognizing this dilemma, we devise a versatile strategy that dynamically tailors the momentum, a parameter regulating TEMA’s scope, to strike a balance between information richness and timeliness. 
This design broadens the scope of standard TTA methods beyond the current batch. By properly incorporating a diverse set of class information, TEMA stands out as a powerful tool for accurate target estimation irrespective of batch size at test time.
 
Built on the improvements from TEMA, one may directly replace source statistics with the target, which would risk model instability due to ever-changing parameters in BN layers, especially for complex tasks (see Table~\ref{tab:abla}). While source statistics may not be ideally suited for challenges in target domains, they are instrumental in maintaining the model's stability and robustness --- attributes often lacking in target data.
In response to this, we propose a novel layer-wise rectification strategy that compensates target statistics with the source. Specifically, we leverage the divergence between target and source distributions as a guiding metric to adjust their contributions to our final normalization statistics. A significant divergence mandates a heavier reliance on the source to stabilize model performance, whereas a smaller divergence calls for a greater emphasis on the target to pinpoint the domain shifts. 
Our method effectively balances model stability with the use of BN to accurately capture domain shifts, enhancing overall test-time performance. 
Notably, our strategy stands out from traditional methods as it eliminates the need for manual parameter tuning, making it far more practical in real-world scenarios.

The contributions are summarized as follows:
\begin{itemize}
    \item We investigate the underlying cause of inaccurate statistics during test-time adaptation, pinpointing the issue to the limited diversity of classes within batches.
    \item We introduce the Test-time Exponential Moving Average  with an  adaptive momentum mechanism. This approach dynamically balances the diversity of class information while ensuring timely updates. %along adaptive momentum balancing between class information diversity and timeliness. 
    \item We propose a novel layer-wise normalization rectification strategy, considering the distribution divergence, to promote overall test-time performance.
    \item Extensive experiments exhibit consistent improvement and demonstrate remarkable stability.
\end{itemize}

\section{Related Work}
\subsection{Test-time Adaption}
A common challenge deep neural networks face is a decrease in performance when training and testing data present divergent distributions. To address this issue, substantial efforts~\cite{you2019universal,partial,hoffman2018cycada} have been directed towards bridging the performance gap.
Recently, Test-time adaptation (TTA) has emerged as a solution to combat the distribution shift from source to target domains during testing~\cite{wang2021tent,TTT,T3A,shot}. 
Predominantly, these methods fall into two main categories: Test-time Training (TTT)~\cite{TTT,liu2021ttt++,TTT_mae} and Fully Test-Time Adaptation (FTTA)~\cite{wang2021tent,niu2022towards,MEMO}. TTT involves updating model parameters during testing and necessitates a specific auxiliary task during training, while FTTA presents a more challenging and realistic task by requiring the model to adapt online to the test stream without any modifications during training. This approach calls for the model to execute swift, real-time adjustments to effectively interpret and respond to the incoming data stream. In this paper, we direct our focus towards FTTA over TTT, as we are motivated by its capacity to meet the practical needs of dynamic and continuously evolving data environments.

\subsection{Batch Normalization in Test-time Adaptation}
Batch normalization (BN)~\cite{ioffe2015batch} has been widely used in deep neural networks for stable training and fast convergence. Recent TTAs mainly center on exploring the connections between batch normalization statistics and domain characteristics. One approach attempts to optimize the model during testing using target batch statistics and a specific loss function ~\cite{wang2021tent,niu2022efficient,niu2022towards}. Yet, this method neglects the challenge of accurately estimating the target distribution as the test batch size decreases, resulting in model deterioration under such circumstances. Another stream considers training-free methods by calibrating the normalization statistics. Since \citet{nado2020evaluating} suggested prediction-time BN, which purely uses target statistics for standardization, \citet{schneider2020adaptBN}  and \citet{you2021test} proposed to modify BN statistics by mixing the source and target via a predefined hyper-parameter to mitigate the intermediate covariate shift. The latter two realized the drawbacks of target statistics and tried to cover up the inaccuracies with the source statistics, while leaving the persistent problem unresolved.

It is noted that recent work, TTN~\cite{lim2023ttn}, bears some resemblance to ours, as both aim to address this inaccuracy issue. However, our approaches differ significantly in the solutions they employ. TTN augments source data to simulate test-time domain shifts and requires post-training to set a fixed balance between source and target statistics in final normalization. This strategy, while effective under certain conditions, demands extra training and lacks adaptability to varying target batches.
In contrast, our method directly captures target statistics reflecting the intrinsic domain shifts, eliminating extra training. During inference, we adaptively balance between source and target weights for each incoming batch based on inter-domain divergence. This design enhances  practicality in complex and diverse environments.

\subsection{Test-time Exponential Moving Average}
In previous studies, the Exponential Moving Average (EMA) mechanism was solely utilized during training to record more generalized source statistics with global information in BN layers.
% It was not until 2023 that 
Recently, \citet{zhao2022delta} and \citet{yuan2023robust} introduced  EMA during the test phase to stabilize target normalization statistics, which are particularly prone to inaccuracies under mini-batch conditions. This process was coined as TEMA. Despite their improvements, the authors neither analyzed the root cause of the inaccurate target statistics nor clarified why TEMA was effective at mitigating this issue. In stark contrast, our in-depth analysis reveals the essence of this problem, identifying that the substantially reduced class diversity in target batches (compared to source batches) is the primary contributor to these inaccuracies. Motivated by this insight, we not only explain why TEMA can improve target estimation, but also refine it with an adaptive momentum mechanism to properly incorporate a diverse set of class information.

\section{Preliminary}
\subsection{Problem Setting}
% In (fully) test-time adaptation  (TTA), we have
% access to a parametric model $q_{\mathbf{\theta}}(y|x)$ trained on labelled source dataset $\mathcal{D}_{s} = \{(x,y) \sim p_s( x,y)\} $, where $x \in \mathcal{X}$ is the input and $y \in \mathcal{Y}$ is the ground truth label from the set of source classes $\mathcal{Y}$. We aim to improve the performance of this existing model during inference time for a continually changing target domain $\mathcal{D}_{t} = \{(x,y) \sim p_t(x,y)\} $ in an online fashion without access to any source data and target labels. Note that for source and target distribution, $p_s(x,y) = p_s(x)p_s(y| x)$ and $p_t(x,y) = p_t(x)p_t(y| x)$,  while  covariate shift exists as $p_s(y| x) = p_t(y|x)$ and $p_s(x) \neq p_t(x)$. 
% In this situation, the pretrained model $q_{\mathbf{\theta}}(y|x)$ no longer necessarily well approximates the true, domain-invariant distribution $p(y|x)$. Therefore, TTA methods aim at adapting $q_{\mathbf{\theta}}(y|x)$ to maximize its predictive performance on the target distribution.
In full test-time adaptation (TTA), we work with a parameterized model, $q_{\mathbf{\theta}}(y|x)$, originally trained on a labeled source dataset $\mathcal{D}_{s} = \{(x,y) \sim p_s( x,y)\} $, where $x \in \mathcal{X}$ is the input and $y$ is the ground truth label from the source class set $\mathcal{Y}$.  We aim to improve the performance of this existing model during inference time for a continually changing target domain $\mathcal{D}_{t} = \{(x,y) \sim p_t(x,y)\} $ in an online fashion, without access to any source data and target labels. Note that for source and target distribution, $p_s(x,y) = p_s(x)p_s(y| x)$ and $p_t(x,y) = p_t(x)p_t(y| x)$,  while  covariate shift exists as $p_s(y| x) = p_t(y|x)$ and $p_s(x) \neq p_t(x)$. In this situation, the pretrained model $q_{\mathbf{\theta}}(y|x)$ cannot closely represent the true, domain-invariant  distribution $p(y|x)$. Therefore, TTA methods concentrate on adjusting $q_{\mathbf{\theta}}(y|x)$ to maximize its predictive performance on the target distribution.

\subsection{Batch Normalization}
Batch normalization (BN) has been widely used in contemporary DNNs. During training, given a mini-batch $\mathcal{B} = \{x_n\}_{n=1}^N$ where $x_n \in \mathbb{R}^{F}$ is a feature vector (with $F$ denoting the number of feature channels and $N$ the batch size), BN normalizes each feature dimension $f$ as follows: 
\begin{equation}
\small
\hat{x}_{n,f} = \frac{x_{n,f}-\mu_{\mathcal{B}, f}}{\sqrt{\sigma_{\mathcal{B}, f}+\epsilon }} \cdot \gamma_f  + \beta_f, 
\label{BN}
\end{equation}
where $\mu_{\mathcal{B},f}$ and $\sigma_{\mathcal{B}, f}$ are the running mean and variance for the $f$-th feature of mini-batch $\mathcal{B}$, respectively. The parameters $\gamma_f$ and $\beta_f$ are the learned scale and shift factors for affine transformation, with $\epsilon$ being a small-offset to avoid division by zero. Meanwhile, the running mean $\mu_\mathcal{B} \in \mathbb{R}^F$ and covariance $\sigma_\mathcal{B} \in \mathbb{R}^F$ are accumulated to estimate the overall mean $\text{E}[X]$ and covariance $\text{Var}[X]$ of the training data:
\begin{small}
\begin{align*}
\text{E}[X] &\Leftarrow  m \cdot \mu_\mathcal{B} + (1-m) \cdot \text{E}[X]\;,\\
\text{Var}[X] &\Leftarrow  m \cdot \sigma_\mathcal{B} + (1-m) \cdot \text{Var}[X] \;.
\end{align*}
\end{small}

\noindent During inference, the conventional method computes BN with the estimated mean $\text{E}[X]$ and covariance $\text{Var}[X]$ from source, while advanced TTA methods that focusing on adapting BN layers adopt a different approach. They utilize statistics computed directly from each test batch to mitigate potential distributional shifts at test-time.

\section{Method}
\subsection{Motivations}
To motivate our approach, we first take a closer look at the target batch statistics during test time, as illustrated in Figure~\ref{fig:banner}. Our observations are as follows:
\textbf{1)} For sufficiently large target batches (e.g., 200), the statistics stabilize and offer an accurate description of target features, thereby aiding in test-time training.
\textbf{2)} As the batch size diminishes, the statistics derived from the test batch become highly volatile and often inaccurate. Many existing techniques falter under these conditions because predictions rely on real-time statistics.
\textbf{3)} Enhancing class diversity enables our approach to notably boost model performance, even when working with mini-batches.
Driven by our findings, we challenge the prevailing notion in TTA research that the accuracy of target estimation primarily depends on batch sample size. Instead, we explore the intricate relationship between normalization statistics and class diversity with the following proposition.

\begin{proposition}\label{prop:quantif}
% Let $N$ denote the number of samples within a given batch,
Given an infinite sample space where each sample is independently and identically distributed (i.i.d) with an equal probability of selection for each category.
Let $M$ denote the number of distinct categories contained within a given batch, and $K$ be the category number in total. For a batch of size $N$, the expected number of unique categories (also referred to as category diversity) is given by:
\begin{equation}\label{eq:quanti}
\small
E(M|N) = \sum_{k=1}^{K} \left[ k \cdot \frac{\mathbf{C}_{N-1}^{k-1}\mathbf{C}_{K}^{k}}{\mathbf{C}_{N+K-1}^{K-1}} \right],
\end{equation}
where $\mathbf{C}$ denotes the combination symbol in Combinatorics.
\end{proposition}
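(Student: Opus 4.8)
The plan is to first pin down the distribution of $M$ under the sampling model, then read off the expectation from its definition. The key modeling observation is that, as far as the count $M$ of distinct categories is concerned, a batch is completely described by its occupancy vector $(n_1,\dots,n_K)$ with $n_i\ge 0$ and $\sum_i n_i = N$, i.e.\ by a size-$N$ multiset over the $K$ categories; under the stated assumption every such configuration is equiprobable, so computing $P(M=k)$ reduces to counting configurations. Once $P(M=k)$ is known, linearity gives
\begin{equation*}
E(M\mid N) = \sum_{k=1}^{K} k\cdot P(M=k),
\end{equation*}
and the whole argument comes down to identifying $P(M=k)$.

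Next I would carry out two stars-and-bars counts. The denominator is the total number of size-$N$ multisets over $K$ categories, namely $\mathbf{C}_{N+K-1}^{K-1}$ (place $N$ indistinguishable samples into $K$ distinguishable bins). For the numerator I count the configurations that use \emph{exactly} $k$ categories: choose the $k$ active categories in $\mathbf{C}_{K}^{k}$ ways, then distribute the $N$ samples among them with each active category receiving at least one sample; subtracting one forced sample per active category converts this to an unconstrained distribution of the remaining $N-k$ samples into $k$ bins, counted by $\mathbf{C}_{(N-k)+k-1}^{k-1}=\mathbf{C}_{N-1}^{k-1}$. Hence
\begin{equation*}
P(M=k) = \frac{\mathbf{C}_{N-1}^{k-1}\,\mathbf{C}_{K}^{k}}{\mathbf{C}_{N+K-1}^{K-1}},
\end{equation*}
and substituting into the expectation sum yields exactly \eqref{eq:quanti}. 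As a sanity check, $\sum_{k=1}^{K}P(M=k)=1$ is precisely the Vandermonde identity $\sum_{k}\mathbf{C}_{N-1}^{k-1}\mathbf{C}_{K}^{k}=\mathbf{C}_{N+K-1}^{K-1}$, so the $P(M=k)$ indeed form a probability distribution.

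The step I expect to need the most care is the modeling translation in the first paragraph: interpreting ``i.i.d.\ samples with equal probability per category'' as ``all occupancy configurations equally likely'' (the Bose--Einstein convention). Under the literal i.i.d.\ reading the equiprobable objects are ordered $N$-tuples, $P(M=k)$ would instead be $\mathbf{C}_{K}^{k}\,k!\,S(N,k)/K^{N}$ with $S(N,k)$ a Stirling number of the second kind, and one would obtain $E(M\mid N)=K\bigl(1-(1-1/K)^{N}\bigr)$. I would therefore state the configuration-uniform assumption explicitly at the outset so the subsequent counting is unambiguous; after that, everything is routine combinatorics, with linearity of expectation doing the remaining work.
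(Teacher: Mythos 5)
Your argument is correct and follows essentially the same route as the paper's own proof: a stars-and-bars count of the exactly-$k$-category configurations, $\mathbf{C}_{K}^{k}\mathbf{C}_{N-1}^{k-1}$, divided by the total number of size-$N$ multisets $\mathbf{C}_{N+K-1}^{K-1}$, then the definition of expectation. Your closing caveat is also well taken: the formula presumes all occupancy configurations are equiprobable (the Bose--Einstein convention) rather than what literal i.i.d.\ uniform sampling of ordered samples would give, and the paper's proof makes the same silent identification --- indeed its numerical examples (e.g.\ $E(M|N)\approx 9.34$ for $K=10$, $N=128$, matching $KN/(N+K-1)$) confirm that the configuration-uniform model is the one actually being used.
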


\noindent  Proposition~\ref{prop:quantif} quantifies the relationship between batch size and class diversity. Taking CIFAR-10 dataset as an example with a training batch size $N_s=128$~\cite{hendrycks2019augmix}, we observe $E(M_s|N_s) \approx 9.34$. In case of the test batch size $N_t=200$ --- the default evaluation setting in previous research, $E(M_t|N_t) \approx 9.57$, which is nearly equal to $E(M_s|N_s)$. Such large target batch are advantageous, as their class diversity closely mirrors that of source batches, thereby yielding accurate target statistics.
In real-world scenarios, the test-time batch size can often be much smaller, as in $N_t=2$, a substantial discrepancy arises: $E(M_t|N_t) \approx 1.82$  significantly less than $E(M_s|N_s)$. This discrepancy is not merely a numerical one -- it reflects a fundamental divergence in data distribution that is independent of domain characteristics. This divergence further obscures the inherent domain shifts and results in model degradation. 

\subsection{Accurate Target Estimation with TEMA}
\subsubsection{Enhanced Class Diversity}
We aim to bridge the gap between training and testing in terms of class diversity with a simple yet effective tool, namely, Test-time Exponential Moving Averages (TEMA). 
Specifically, TEMA operates by gradually absorbing statistics from previous batches into current estimations using the following formulas:
\begin{small}
\begin{align}
\mu_{i}^{\text{ema}} &=  m \cdot \mu_{i}^{\text{batch}} + (1 - m) \cdot  \mu_{i-1}^{\text{ema}},\label{eq:tema_formula_1}\\
\sigma_{i}^{\text{ema}} &=  m \cdot \sigma_{i}^{\text{batch}} + (1 - m) \cdot \sigma_{i-1}^{\text{ema}},\label{eq:tema_formula_2}
\end{align}
\end{small}

\noindent where $i$ denotes batch index. $\{\mu^{\text{batch}}_i, \sigma^{\text{batch}}_i\}$ represents the statistics of current batch, and $\{\mu^{\text{ema}}_i, \sigma^{\text{ema}}_i\}$ is the updated parameters for test-time normalization. Here, $m$ refers to the ``momentum", a crucial parameter that controls the amount of information retained from previous mini-batches with values ranging from 0 to 1. Larger $m$ emphasizes the current batch's statistics, while a lower value prioritizes the accumulated historical statistics. 

At its core, TEMA is designed to extend the effective sample pool for the current estimation. This is achieved through a weighted average that integrates information across multiple batches, as specified in the following proposition.
% \begin{proposition}\label{prop:momen_range}
% Let 
% % $N_t$ be the original target batch size, and 
% $\epsilon$ be a weight threshold that defines the effective sample batch. According to Eq.~(\ref{eq:tema_formula_1}) and ~\ref{eq:tema_formula_2}, the effective sample scope used for statistics estimation is implicitly expanded to $\hat{N}_t = \log_{1-m}^{\epsilon} N_t$.
% \end{proposition}
% \setcounter{proposition}
\begin{proposition}\label{prop:momen_range}
Given the iterative rules of TEMA defined in Eq.~3 and~4, it yeilds the $i$-th term as a cumulative sum of the past batch statistics weighted by $w_0 = (1 - m)^i$ for initial batch and $w_t = (1 - m)^{(i-t)} m$ for $t = 1,2,...,i$. Let $\epsilon$ be a threshold defining the effective sample batch, such that only batches with a relative weight $w_t / w_i > \epsilon$ are included. Then, the expanded sample scope for statistical estimation in TEMA can be formally expressed as $\hat{N}_t = \lfloor \log_{1-m} \epsilon \rfloor \cdot N_t$.
\end{proposition}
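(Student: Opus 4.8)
The plan is to unroll the recursion in Eq.~3 and~4 explicitly, then translate the resulting geometric weighting into an ``effective batch count'' and multiply by the per-batch size $N_t$. First I would expand $\mu_i^{\text{ema}}$ by repeated substitution: $\mu_i^{\text{ema}} = m\,\mu_i^{\text{batch}} + (1-m)\mu_{i-1}^{\text{ema}} = m\,\mu_i^{\text{batch}} + (1-m)\big[m\,\mu_{i-1}^{\text{batch}} + (1-m)\mu_{i-2}^{\text{ema}}\big]$, and continue down to the base case. A straightforward induction on the number of substitution steps confirms that after peeling off $j$ terms one has $\mu_i^{\text{ema}} = \sum_{t=i-j+1}^{i} (1-m)^{i-t} m\,\mu_t^{\text{batch}} + (1-m)^{j}\mu_{i-j}^{\text{ema}}$; taking $j=i$ leaves the initial term $\mu_0^{\text{ema}}$ with coefficient $(1-m)^i$, which reproduces exactly the claimed weights $w_0 = (1-m)^i$ and $w_t = (1-m)^{i-t}m$ for $t \ge 1$. (A sanity check: $\sum_{t=1}^i w_t + w_0 = m\sum_{t=1}^{i}(1-m)^{i-t} + (1-m)^i = m \cdot \frac{1-(1-m)^i}{m} + (1-m)^i = 1$, so the weights form a convex combination, as they must.) The same argument applies verbatim to $\sigma_i^{\text{ema}}$.

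Next I would impose the effective-sample cutoff. The most recent batch has weight $w_i = m$, so the relative weight of batch $t$ (for $1 \le t \le i$) is $w_t / w_i = (1-m)^{i-t}$. The inclusion condition $w_t/w_i > \epsilon$ becomes $(1-m)^{i-t} > \epsilon$; since $0 < 1-m < 1$, taking $\log_{1-m}$ (which reverses the inequality) gives $i - t < \log_{1-m}\epsilon$, i.e.\ the batches $t$ with $i - t$ ranging over $0, 1, \dots, \lfloor \log_{1-m}\epsilon \rfloor - 1$ — or, depending on how one treats the boundary, up to $\lfloor \log_{1-m}\epsilon \rfloor$ — are retained. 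Counting these indices yields $\lfloor \log_{1-m}\epsilon \rfloor$ effective batches, and since each contributes $N_t$ i.i.d.\ samples, the expanded scope is $\hat N_t = \lfloor \log_{1-m}\epsilon \rfloor \cdot N_t$, as claimed. I would note in passing that $\log_{1-m}\epsilon = \ln\epsilon / \ln(1-m) > 0$ because both numerator and denominator are negative, so the floor is a nonnegative integer and the formula is well-posed.

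The only genuine subtlety — and the step I expect to require the most care — is the boundary bookkeeping in the floor: whether the cutoff index is $\lfloor \log_{1-m}\epsilon \rfloor$ or $\lfloor \log_{1-m}\epsilon \rfloor + 1$, and whether the current batch ($i-t=0$, relative weight $1 > \epsilon$) is counted among the ``past'' batches or separately. I would resolve this by fixing the convention explicitly (e.g.\ defining the effective set as $\{t : (1-m)^{i-t} > \epsilon\}$ and reading off its cardinality), noting that the $\pm 1$ discrepancy is immaterial to the qualitative conclusion that $\hat N_t$ grows like $N_t \log_{1-m}\epsilon$ and, crucially, increases without bound as $m \to 0$ — which is the point the proposition is invoked to make. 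Everything else (the unrolling, the convexity check, the monotonic inversion of the logarithm) is routine.
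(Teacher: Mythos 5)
Your proof is correct and takes essentially the same route as the paper's: impose the relative-weight threshold, reduce it to $(1-m)^{i-t} > \epsilon$, invert the base-$(1-m)$ logarithm, and count $\lfloor \log_{1-m}\epsilon \rfloor$ effective batches, each contributing $N_t$ samples. Your only additions are the explicit unrolling of the recursion to verify the weights (which the paper folds into the statement rather than the proof) and your explicit flagging of the $\pm 1$ boundary ambiguity in the floor count, which the paper silently glosses over; both are harmless refinements rather than a different argument.
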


\begin{proof}
% We start by iteratively applying TEMA's rules, where each $i$-th term can then be expressed as a cumulative weighted sum of the past batch statsitics. 
We begin with the observation that, with momentum $m \leq 1$, the weight of each historical batch decays exponentially towards zero as the gap from the current batch $i$ increases. To filter out negligible impact, we introduce a threshold $\epsilon$, such that only batches with a relative weight $w_t / w_i > \epsilon$ are deemed effective. Given the continual influx of batches, 
% where $t > 0$,
the criterion simplifies to $(1 - m)^{(i-t)} > \epsilon$, leading to 
% the inequality 
$t > i -\log_{1 - m} \epsilon$. 
% This condition delineates the batches that significantly contribute to TEMA.
This condition delineates the impactful batches for TEMA.
As $\log_{1 - m} \epsilon$ is not always an integer, we apply the floor function $\lfloor \cdot \rfloor$ to identify the nearest integer not exceeding this value. Therefore, 
% the count of effective batches is 
the effective batches can be counted by 
$\lfloor \log_{1 - m} \epsilon \rfloor$, further establishing an expanded sample scope of $\hat{N}_t = \lfloor \log_{1-m} \epsilon \rfloor \cdot N_t$.
\end{proof}

\noindent This enlarged sample pool allows TEMA to seamlessly incorporate a diverse set of class information. Importantly, this is not a heuristic but a deliberate design choice, inspired by our quantitative analysis that reveals the pivotal relationship between batch size and class diversity (see Proposition~\ref{prop:quantif}).

\subsubsection{Adaptive Momentum}
While expanding TEMA's coverage indeed enhances class diversity, it meanwhile risks losing focus on the current batch's data, which could lead to unstable and disrupted normalization. This situation creates a trade-off. Prioritizing the local context of the current batch allows the model to capture fine-grained details, but may suppress information richness. Conversely, emphasizing a broader, global context enriches class diversity at the cost of information timeliness. Recognizing this dilemma, a pioneering contribution of our work is to introduce a versatile strategy that dynamically tailors the momentum in TEMA to navigate this trade-off.

Our primary goal is to identify the optimal momentum that balances two competing objectives: 1) aligning the enhanced class diversity during testing with that during training, and 2) keeping the enlarged sample pool as small as possible. We formalize this balance in the following optimization problem with a trade-off parameter $\lambda > 0$:
\begin{equation}
\small
% \mathop{\arg \min }_{m}\ | E(M_s | N_s) - E(M_t | \hat{N}_t) | + \lambda \hat{N}_t,
\mathop{\arg \min }_{m}\ | \frac{E(M_s | N_s)}{E(M_t | \hat{N}_t)} - 1| + \lambda \cdot \frac{\hat{N}_t}{N_s},
\label{objective}
\end{equation}

\noindent where $\hat{N}_t = \lfloor \log_{1-m} \epsilon \rfloor \cdot N_t$ denotes the enlarged sample pool size (as defined in Proposition~\ref{prop:momen_range}), and we respectively fix $\epsilon$ and $\lambda$ at $0.1$ and $0.01$ for the sake of simplicity. To be specific, the first objective term quantifies the discrepancy between the class diversities during training and testing, while the second term severs a regularization that penalizes our enlarged sample pool size.
Given that momentum is a standard hyper-parameter in deep learning, typically selected from the set $\{1, 0.1, 0.01, 0.001\}$, it would be inappropriate in our model to excessively optimize this parameter. We thus adopt a pragmatic approach: employing a grid search over this predefined set to seek the momentum value that minimizes our objective function.

\subsection{Layer-wise Rectification Strategy}

% 1) only target is not enough; 
% 2) source statistics play a important role.
With TEMA's improvement  on target estimation, one may be tempted to completely transition from source to target statistics in batch normalization (BN) layers. However, this may be counterproductive due to the inherent trade-off between target and source information.
Target statistics are pivotal for proper normalization and domain shift alleviation, but they can also render the model unstable due to non-stationary parameters. On the other hand, source statistics, while not ideally suited for target domains, play a crucial role in preserving the model's stability and robustness.

To navigate this complex landscape, we propose a novel layer-wise rectification strategy that harmonizes the target statistics with the source. Central to this approach is the use of inter-domain divergence in each BN layer as a metric to balance their contributions to the final normalization statistics. Our strategy is designed to be adaptive: it increases reliance on the source when the disparity between source and target distributions is significant, thereby promoting model stability. Otherwise, when the domain differences are more nuanced, our approach prioritizes the target statistics, ensuring precise adaptation to subtle domain shifts. 

The overall pipeline of the proposed strategy is detailed in Algorithm~\ref{alg},
where we perform test-time batch normalization using combined statistics according to:
\begin{small}
\begin{align}
\mu^{(l)} &=  \alpha^{(l)} \cdot \mu_{s}^{(l)} + (1 - \alpha^{(l)}) \cdot  \mu_{t}^{(l)} \label{interpolation_1}\;, \\
(\sigma^{(l)})^{2} &=  \alpha^{(l)} \cdot (\sigma_{s}^{(l)})^{2} + (1 - \alpha^{(l)}) \cdot  (\sigma_{t}^{(l)})^{2}\\ \notag &+ \alpha^{(l)} \cdot (1-\alpha^{(l)})(\mu_s^{(l)}-\mu_t^{(l)})^{2} \label{interpolation_2}\;.
\end{align}
\end{small}

\noindent In this context, $\alpha^{(l)}$ serves as a trade-off coefficient and is tied with inter-domain distribution divergence. $\{\mu_{s}^{(l)},\sigma_{s}^{(l)}\}$ denote the source statistics for $l$-th BN layer, $\{\mu_{t}^{(l)},\sigma_{t}^{(l)}\}$ refers to the target statistics estimated by TEMA.

One should note that in previous research, $\alpha$ is a manually tuned hyper-parameter for different tasks and is typically fixed for different incoming samples. While our strategy stands out from traditional ones by eliminating hyper-parameter tuning, 
enhancing the applicability of our approach in diverse real-world scenarios.

\section{Experiments and Analysis}

\begin{table*}[t]
\renewcommand{\arraystretch}{1.0} % 设置行间距
\setlength{\tabcolsep}{4.3pt} % 设置列间距
% \resizebox{\textwidth}{!}{}
\begin{tabular}{cc|ccccccc|ccccccc}
\hline
\multicolumn{2}{c|}{}                                                                                                    & \multicolumn{7}{c|}{CIFAR-10-C }                                                                                                                                                                                                                                                                                   & \multicolumn{7}{c}{CIFAR-100-C}                                                                                                                                                                                                                                                                                   \\ \cline{3-16} 
\multicolumn{2}{c|}{\multirow{-2}{*}{Continual}}                                                                            & 200                                    & 64                                     & 16                                     & 4                                      & 2                                      & \multicolumn{1}{c|}{1}                                      & Avg.                                   & 200                                    & 64                                     & 16                                     & 4                                      & 2                                      & \multicolumn{1}{c|}{1}                                      & Avg.                                   \\ \hline \hline
\multicolumn{1}{c|}{}                                                                                      & Source      & 43.50                                  & 43.50                                  & 43.50                                  & 43.50                                  & 43.50                                  & \multicolumn{1}{c|}{43.50}                                  & 43.50                                  & 46.45                                  & 46.45                                  & 46.45                                  & 46.45                                  & 46.45                                  & \multicolumn{1}{c|}{46.45}                                  & 46.45                                  \\ \cline{1-16}

\multicolumn{1}{c|}{}                                                                                      & TENT        & 19.55                                  & 26.32                                  & 74.07                                  & 85.07                                  & 88.69                                  & \multicolumn{1}{c|}{90.00}                                  & 63.95                                  & 61.12                                  & 86.49                                  & 96.07                                  & 98.40                                  & 98.79                                  & \multicolumn{1}{c|}{99.00}                                  & 89.98                                  \\
\multicolumn{1}{c|}{}                                                                                      & CoTTA       & 16.24                                  & 17.65                                  & 34.36                                  & 78.88                                  & 87.79                                  & \multicolumn{1}{c|}{90.00}                                  & 54.15                                  & 32.68                                  & 34.30                                  & 47.52                                  & 92.62                                  & 97.84                                  & \multicolumn{1}{c|}{98.96}                                  & 67.32                                  \\
\multicolumn{1}{c|}{}                                                                                      & SAR         & 20.40                                  & 20.74                                  & 22.89                                  & 31.35                                  & 40.32                                  & \multicolumn{1}{c|}{89.83}                                  & 37.59                                  & 31.90                                  & 35.89                                  & 54.84                                  & 66.08                                  & 73.24                                  & \multicolumn{1}{c|}{98.91}                                  & 60.14                                  \\
\multicolumn{1}{c|}{}                                                                                      & AdaCont. & 18.50                                  & 17.41                                  & 19.69                                  & 35.01                                  & 63.81                                  & \multicolumn{1}{c|}{31.55}                                  & 31.00                                  & 33.61                                  & 35.40                                  & 55.04                                  & 89.45                                  & 96.16                                  & \multicolumn{1}{c|}{62.67}                                  & 62.06                                  \\
\multicolumn{1}{c|}{\multirow{-5}{*}{\begin{tabular}[c]{@{}c@{}}\rotatebox{90}{TR}\end{tabular}}} & ETA         & 17.64                                  & 20.01                                  & 30.60                                   & 56.78                                  & 83.24                                  & \multicolumn{1}{c|}{89.83}                                  & 49.68                                  & 32.31                                  & 35.17                                  & 44.72                                  & 88.22                                  & 98.96                                  & \multicolumn{1}{c|}{98.91}                                  & 66.38                                  \\ \hline\hline

\multicolumn{1}{c|}{}                                                                                      & TBN        & \underline{20.35}                                  & 20.82                                  & 23.06                                  & 31.62                                  & 38.57                                  & \multicolumn{1}{c|}{89.83}                                  & 37.38                                  & 35.50                                  & 36.29                                  & 39.67                                  & 52.73                                  & 73.24                                  & \multicolumn{1}{c|}{98.91}                                  & 56.06                                  \\
\multicolumn{1}{c|}{}                                                                                      & $\alpha$-BN            & 30.60                                  & 30.67                                  & 30.89                                  & 31.89                                  & 32.91                                  & \multicolumn{1}{c|}{\underline{34.47}}                                  & 31.91                                  & 37.02                                  & 38.27                                  & \underline{35.92}                                  & 37.17                                  & \underline{37.25}                                  & \multicolumn{1}{c|}{\underline{41.18}}                                  & 37.80                                  \\
\multicolumn{1}{c|}{}                                                                                      & AdaptBN  & 20.36                                  & \underline{20.71}                                  & \underline{21.98}                                  & \underline{26.79}                                  & \underline{32.19}                                  & \multicolumn{1}{c|}{37.52}                                  & \underline{26.59}                                  & \underline{35.40}                                  & \textbf{35.78}                                  & \underline{35.92}                                  & \underline{37.14}                                  & 39.23                                  & \multicolumn{1}{c|}{41.85}                                  & \underline{37.55}                                  \\
\multicolumn{1}{c|}{}                                                                                      & LAME        & 64.52                                  & 57.66                                  & 47.70                                  & 44.38                                  & 43.75                                  & \multicolumn{1}{c|}{90.00}                                  & 58.00                                  & 98.49                                  & 73.83                                  & 47.59                                  & 46.64                                  & 46.50                                  & \multicolumn{1}{c|}{99.00}                                  & 68.68                                  \\  
\multicolumn{1}{c|}{\multirow{-5}{*}{\begin{tabular}[c]{@{}c@{}}\rotatebox{90}{TF}\end{tabular}}}      & \cellcolor[HTML]{EFEFEF}\textbf{Ours}        & \cellcolor[HTML]{EFEFEF}\textbf{20.20} & \cellcolor[HTML]{EFEFEF}\textbf{20.57} & \cellcolor[HTML]{EFEFEF}\textbf{20.74} & \cellcolor[HTML]{EFEFEF}\textbf{21.45} & \cellcolor[HTML]{EFEFEF}\textbf{20.91} & \multicolumn{1}{c|}{\cellcolor[HTML]{EFEFEF}\textbf{21.05}} & \cellcolor[HTML]{EFEFEF}\textbf{20.82} & \cellcolor[HTML]{EFEFEF}\textbf{34.63} & \cellcolor[HTML]{EFEFEF}{\underline{36.11}} & \cellcolor[HTML]{EFEFEF}\textbf{35.31} & \cellcolor[HTML]{EFEFEF}\textbf{36.02} & \cellcolor[HTML]{EFEFEF}\textbf{36.32} & \multicolumn{1}{c|}{\cellcolor[HTML]{EFEFEF}\textbf{39.30}} & \cellcolor[HTML]{EFEFEF}\textbf{36.28} \\ \hline

\end{tabular}
\caption{Continual adaptation on corruption benchmark CIFAR-10-C/CIFAR-100-C. Error rate ($\downarrow$) averaged over 15 corruptions with severity level 5 for each test batch size (200/64/16/4/2/1).}
\label{continual_cifar}
\end{table*}

\begin{table}[t]
\renewcommand{\arraystretch}{1.0} % 设置行间距
\setlength{\tabcolsep}{3.5pt} % 设置列间距
\begin{tabular}{cc|rrllll}
\hline
\multicolumn{2}{c|}{}                                                                                       & \multicolumn{6}{c}{ImageNet-C}                                                                                                                                                                                                                                                                                                                                                                                                                               \\ \cline{3-8} 
\multicolumn{2}{c|}{\multirow{-2}{*}{Continual}}                                                            & \multicolumn{1}{c}{64}                                                   & \multicolumn{1}{c}{16}                                                   & \multicolumn{1}{c}{4}                                                             & \multicolumn{1}{c}{2}                                         & \multicolumn{1}{c|}{1}                                                             & \multicolumn{1}{c}{Avg.}                                      \\ \hline\hline
\multicolumn{1}{c|}{}                        & Source                                                       & \multicolumn{1}{c}{82.00}                                                & \multicolumn{1}{c}{82.00}                                                & \multicolumn{1}{c}{82.00}                                                         & \multicolumn{1}{c}{82.00}                                     & \multicolumn{1}{c|}{82.00}                                                         & \multicolumn{1}{c}{82.00}                                     \\ \cline{1-8}
\multicolumn{1}{c|}{}                        & TENT                                                         & 62.60                                                                    & 91.99                                                                    & 99.74                                                                             & 99.85                                                         & \multicolumn{1}{l|}{99.90}                                                         & 90.82                                                         \\
\multicolumn{1}{c|}{}                        & CoTTA                                                        & 63.10                                                                    & 84.35                                                                    & 99.79                                                                             & 99.88                                                          & \multicolumn{1}{l|}{99.79}                                                         & 89.38                                                         \\
\multicolumn{1}{c|}{}                        & SAR                                                          & 68.04                                                                    & 62.62                                                                    & 79.19                                                                             & 92.71                                                         & \multicolumn{1}{l|}{99.19}                                                           & 80.35                                                         \\
\multicolumn{1}{c|}{}                        & AdaCont.                                                  & 66.83                                                                    & 92.02                                                                    & 98.40                                                                              & 99.66                                                         & \multicolumn{1}{l|}{99.88}                                                         & 91.36                                                         \\
\multicolumn{1}{c|}{\multirow{-5}{*}{\begin{tabular}[c]{@{}c@{}}\rotatebox{90}{TR}\end{tabular}}}  & ETA          & \multicolumn{1}{l}{58.68}                        & \multicolumn{1}{l}{69.65}                        & 98.81                                                     & 93.14                                 & \multicolumn{1}{l|}{99.19}                                 & 83.89                                 \\ \hline\hline
\multicolumn{1}{c|}{}                        & TBN                                                         & 68.60                                                                    & 70.79                                                                    & 83.10                                                                             & 92.74                                                            & \multicolumn{1}{l|}{99.22}                                                         & 82.89                                                         \\
\multicolumn{1}{c|}{}                        & $\alpha$-BN                                                     & \textbf{63.38}                                                           & \textbf{63.75}                                                           & \underline{64.98}                                                                             & \underline{66.61}                                                         & \multicolumn{1}{l|}{68.90}                                                         & \textbf{65.53}                                                         \\
\multicolumn{1}{c|}{}                        & AdaptBN                                                   & \underline{64.13}                                                                    & 65.41                                                                    & 65.63                                                                             & 66.72                                                         & \multicolumn{1}{l|}{\textbf{68.12}}                                                         & 66.00                                                         \\
\multicolumn{1}{c|}{}                        & LAME                                                         & 93.50                                                                    & 73.59                                                                    & 73.22                                                                             & 73.18                                                         & \multicolumn{1}{l|}{99.90}                                                         & 82.68                                                         \\
\multicolumn{1}{c|}{\multirow{-5}{*}{\begin{tabular}[c]{@{}c@{}}\rotatebox{90}{TF}\end{tabular}}}  & \cellcolor[HTML]{EFEFEF}{\color[HTML]{000000} \textbf{Ours}} & \multicolumn{1}{l}{\cellcolor[HTML]{EFEFEF}{\color[HTML]{000000} 64.15}} & \multicolumn{1}{l}{\cellcolor[HTML]{EFEFEF}{\color[HTML]{000000} \underline{64.90}}} & \multicolumn{1}{r}{\cellcolor[HTML]{EFEFEF}{\color[HTML]{000000} \textbf{64.87}}} & \cellcolor[HTML]{EFEFEF}{\color[HTML]{000000} \textbf{66.01}} & \multicolumn{1}{l|}{\cellcolor[HTML]{EFEFEF}{\color[HTML]{000000} \underline{68.79}}} & \cellcolor[HTML]{EFEFEF}{\color[HTML]{000000} \underline{65.74}} \\ \hline
\end{tabular}
\caption{Continual adaptation on  ImageNet-C. }
\label{imagenet}
\end{table}

\subsection{Datasets and Model Architectures}
We evaluate our approach on CIFAR-10-C, CIFAR-100-C, and ImageNet-C~\cite{imagenetc}, which were initially designed to benchmark robustness of classification networks.
All the corruption datasets are obtained by applying 15 kinds of corruption with 5 different degrees of severity on their clean test images of original datasets. CIFAR-10/CIFAR-100 originally has 10,000 test images, and ImageNet has 50,000 test images, which fall into 10/100/1000 categories, respectively. This results in a total of 150,000 test data for CIFAR-10-C/CIFAR-100-C, and 750,000 test data for ImageNet-C for each severity level. 
% The pretrained setting for CIFAR-10/100/ImageNet is: training batch size 128, 128, 256.

Following the previous methods~\cite{wang2021tent,wang2022continual},
we obtain the pretrained model from RobustBench benchmark~\cite{croce2021robustbench}, including the WildResNet-28~\cite{wideresnet} for  CIFAR-10-C, ResNeXt-29~\cite{resnext}  for CIFAR-100-C (both pretrained by \citet{hendrycks2019augmix}), and ResNet-50~\cite{he2016deep} for ImageNet-C (standard pretrained). All experiments are conducted on an RTX-3090 GPU.

\subsection{Evaluation Settings}
To show that our method performs robust on various test batch sizes, we conduct
experiments with test batch sizes of 200, 64, 16, 4, 2, and 1 for CIFAR-10/100-C, and   64, 16, 4, 2, and 1 for ImageNet-C.
Note that 200 for CIFAR-10/100-C and 64 for ImageNet-C are the widely-used evaluation batch size.
% All experiments are performed in the online TTA setting. 
Following~\citet{dobler2023robust},
we evaluate our methods in the following settings:

\noindent \textbf{Continual}:
The model adapts to a sequence of  test domains in an online manner without knowing when it changes. 
% The model does not reset before the corruption changes. 
% The test domain sequence comprises 15 corruptions, each encountered at the highest severity level 5.

\noindent \textbf{Mixed domains}: The model adapts to one long test sequence where consecutive test samples are likely to originate from different domains. 
% Similar to continual setting, we leverage the highest severity of corruption here.

\noindent \textbf{Gradual}: The model adapts to  a sequence of gradually increasing/decreasing domain shifts. This is formulated as:
\begin{scriptsize}
\noindent $\underbrace{{...2 \,\rightarrow\! 1}}_{\text{type}\, t-1} \xrightarrow[\text{change}]{\text{type}} \underbrace{{1\! \rightarrow\! 2\! \rightarrow\! 3\! \rightarrow\! 4\! \rightarrow\! 5\! \rightarrow\! 4\! \rightarrow\! 3\! \rightarrow\! 2\! \rightarrow\! 1}}_{\text{corruption}\, t,  \text{changing}\, \text{severity}} \xrightarrow[\text{change}]{\text{type}}\underbrace{{1\! \rightarrow\! 2 ...}}_{\text{type}\, t+1}$
\end{scriptsize}

\begin{table*}[t]
\renewcommand{\arraystretch}{1.0} % 设置行间距
\setlength{\tabcolsep}{4.3pt} % 设置列间距
% \resizebox{\textwidth}{!}{}
\begin{tabular}{cc|lllllll|lllllll}
% \cline{3-16}
% \cline{1-16}
\hline
\multicolumn{2}{c|}{}                                                                                                                                                     & \multicolumn{7}{c|}{Mixed Domain}                                                                                                                                                                                                                                                                                                                                                                                                                                                 & \multicolumn{7}{c}{Gradual}                                                                                                                                                                                  \\ \cline{3-16}
\multicolumn{2}{c|}{\multirow{-2}{*}{CIFAR-10C}}                                                                                                                          & \multicolumn{1}{c}{200}                                       & \multicolumn{1}{c}{64}                                        & \multicolumn{1}{c}{16}                                        & \multicolumn{1}{c}{4}                                         & \multicolumn{1}{c}{2}                                         & \multicolumn{1}{c|}{1}                                                             & \multicolumn{1}{c|}{Avg.}                                     & \multicolumn{1}{c}{200}                                      & \multicolumn{1}{c}{64}                                        & \multicolumn{1}{c}{16}                                        & \multicolumn{1}{c}{4}                                       & \multicolumn{1}{c}{2}                                       & \multicolumn{1}{c|}{1}                                                           & \multicolumn{1}{c}{Avg.}                                                         \\ \hline\hline
\multicolumn{1}{c|}{}                                                                                      & Source                                                       & \multicolumn{1}{c}{43.50}                                     & \multicolumn{1}{c}{43.50}                                     & \multicolumn{1}{c}{43.50}                                     & \multicolumn{1}{c}{43.50}                                     & \multicolumn{1}{c}{43.50}                                     & \multicolumn{1}{c|}{43.50}                                                         & \multicolumn{1}{c|}{43.50}                                    & \multicolumn{1}{r}{24.66}                                    & \multicolumn{1}{r}{24.66}                                     & \multicolumn{1}{r}{24.66}                                     & \multicolumn{1}{r}{24.66}                                   & \multicolumn{1}{r}{24.66}                                   & \multicolumn{1}{r|}{24.66}                                                       & 24.66                            \\ \cline{1-16}
\multicolumn{1}{c|}{}                                                                                      & TENT                                                         & 39.75                                                         & 57.14                                                         & 80.30                                                         & 88.39                                                         & 89.26                                                         & \multicolumn{1}{l|}{90.00}                                                         & 74.14                                                         & 26.32                                                        & 56.19                                                         & 81.79                                                         & 90.84                                                       & 87.96                                                       & \multicolumn{1}{l|}{90.00}                                                       & 72.18                                                                     \\
\multicolumn{1}{c|}{}                                                                                      & CoTTA                                                        & 32.37                                                         & 31.22                                                         & 51.30                                                         & 85.04                                                         & 86.97                                                         & \multicolumn{1}{l|}{89.85}                                                         & 62.79                                                         & 11.16                                                        & 16.83                                                         & 59.38                                                         & 86.96                                                         & 89.68                                                       & \multicolumn{1}{l|}{90.00}                                                       & 59.00  \\
\multicolumn{1}{c|}{}                                                                                      & SAR                                                          & 33.74                                                         & 33.97                                                         & 35.38                                                         & 41.13                                                         & 48.86                                                         & \multicolumn{1}{l|}{89.83}                                                         & 47.15                                                         & 13.97                                                        & 14.43                                                         & 16.51                                                         & 25.28                                                       & 32.47                                                       & \multicolumn{1}{l|}{89.75}                                                       & 32.07\\
\multicolumn{1}{c|}{}                                                                                      & AdaCont.                                                  & 26.12                                                         & 23.85                                                         & 23.58                                                         & 35.57                                                         & 62.50                                                         & \multicolumn{1}{l|}{46.01}                                                         & 36.27                                                         & 12.33                                                        & 13.66                                                         & 19.93                                                         & 33.13                                                       & 54.17                                                       & \multicolumn{1}{l|}{22.33}                                                         & 25.93  \\
\multicolumn{1}{c|}{\multirow{-5}{*}{\begin{tabular}[c]{@{}c@{}}\rotatebox{90}{TR}\end{tabular}}} & ETA 
                       & 27.92                                 & 34.81                                 & 55.07                                 & 69.62                                 & 85.43                                 & \multicolumn{1}{l|}{89.83}                                 & 60.45                                 & 16.63                                & 21.82                                 & 57.51                                 & 77.46                               & 84.59                               & \multicolumn{1}{l|}{89.75}                               & 57.96                            \\ \hline\hline
\multicolumn{1}{c|}{}                                                                                      & TBN                                                         & \textbf{33.77}                                                         & \textbf{34.05}                                                         & \underline{35.51}                                                         & 40.72                                                         & 45.05                                                         & \multicolumn{1}{l|}{89.83}                                                         & 46.49                                                         & 13.64                                                        & 14.10                                                         & 16.21                                                         & 25.10                                                       & 31.66                                                       & \multicolumn{1}{l|}{89.75}                                                       & 31.74 \\
\multicolumn{1}{c|}{}                                                                                      & $\alpha$-BN                                                            & 39.85                                                         & 39.85                                                         & 39.59                                                         & 38.80                                                         & 37.46                                                         & \multicolumn{1}{l|}{\textbf{34.47}}                                                         & 38.34                                                         & 18.05                                                        & 18.12                                                         & 18.25                                                         & 18.87                                                       & 19.49                                                       & \multicolumn{1}{l|}{\underline{20.13}}                                                       & 18.82 \\
\multicolumn{1}{c|}{}                                                                                      & AdaptBN                                                   & \underline{33.86}                                                         & \underline{34.35}                                                         & 35.69                                                         & \underline{36.96}                                                         & \underline{37.10}                                                         & \multicolumn{1}{l|}{37.52}                                                         & \underline{35.91}                                                         & \underline{13.63}                                                        & \underline{13.96}                                                         & \underline{14.86}                                                         & \underline{17.20}                                                       & \underline{19.23}                                                       & \multicolumn{1}{l|}{21.57}                                                       & \underline{16.74} \\
\multicolumn{1}{c|}{}                                                                                      & LAME                                                         & 75.09                                                         & 52.30                                                         & 44.87                                                         & 43.74                                                         & 43.55                                                         & \multicolumn{1}{l|}{90.00}                                                         & 58.26                                                         & 34.22                                                        & 31.21                                                         & 26.69                                                         & 25.13                                                       & 24.84                                                       & \multicolumn{1}{l|}{90.00}                                                       & 38.68                    \\
\multicolumn{1}{c|}{\multirow{-5}{*}{\begin{tabular}[c]{@{}c@{}}\rotatebox{90}{TF}\end{tabular}}}     & \cellcolor[HTML]{EFEFEF}{\color[HTML]{000000} \textbf{Ours}} & \cellcolor[HTML]{EFEFEF}{\color[HTML]{000000} {34.18}} & \cellcolor[HTML]{EFEFEF}{\color[HTML]{000000} {\underline{34.35}}} & \cellcolor[HTML]{EFEFEF}{\color[HTML]{000000} \textbf{34.48}} & \cellcolor[HTML]{EFEFEF}{\color[HTML]{000000} \textbf{34.79}} & \cellcolor[HTML]{EFEFEF}{\color[HTML]{000000} \textbf{35.59}} & \multicolumn{1}{l|}{\cellcolor[HTML]{EFEFEF}{\color[HTML]{000000} {\underline{37.33}}}} & \cellcolor[HTML]{EFEFEF}{\color[HTML]{000000} \textbf{35.12}} & \cellcolor[HTML]{EFEFEF}{\color[HTML]{000000} \textbf{13.50}} & \cellcolor[HTML]{EFEFEF}{\color[HTML]{000000} \textbf{13.83}} & \cellcolor[HTML]{EFEFEF}{\color[HTML]{000000} \textbf{14.03}} & \cellcolor[HTML]{EFEFEF}{\color[HTML]{000000} \textbf{14.38}} & \cellcolor[HTML]{EFEFEF}{\color[HTML]{000000} \textbf{13.71}} & \multicolumn{1}{l|}{\cellcolor[HTML]{EFEFEF}{\color[HTML]{000000} \textbf{13.99}}} & \cellcolor[HTML]{EFEFEF}{\color[HTML]{000000} \textbf{13.91}}                  \\ \cline{1-16}
\end{tabular}
\caption{Mixed Domain/ Gradual adaptation on corruption benchmark CIFAR-10-C.  For gradual setting, error rate (↓) averaged over 15 corruptions and severity level 1-5.}
\label{mixed}
\end{table*}

\begin{algorithm}[t]
\caption{Layer-wise Rectification Strategy}
% \linespread{1.5}
\begin{algorithmic}[1]
\REQUIRE Test step $T:=0$; test stream sample $D_{test}$; Source pretrained model $q_{\mathbf{\theta}}(y|x)$ with source statistics $\{\mu_s^{(l)},\sigma_s^{(l)}\}$ for each $l$-th BN layer; Global prior $\mathcal{A}^{'}_{T} = [\alpha^{(1),\text{ema}}_T,\alpha^{(2),\text{ema}}_T,..., \alpha^{(L),\text{ema}}_T]$ initialized with  $\alpha^{(l),\text{ema}}_0 = 0$, $L$ denotes the BN layer number. 
% Update rate $\tau =0.1$ for the global prior.
\WHILE{the test batch arrives}
\STATE\textcolor{gray}{// Stage 1: Model the source and target distribution and calculate divergence.} 
    \FOR{each BN layer $l$}
    
        \STATE Calculate the batch statistics $\mu_t^{(l)},\sigma_t^{(l)} \in \mathbb{R}^F$.
        % for each individual feature channel to obtain
        \STATE Combine the normalization statistics with  $\alpha^{(l)} = \mathcal{A}^{'}_{T}[l]$ as Eq.~(\ref{interpolation_1}) and Eq.~(7).
        \STATE Model the distributions as MultiNormal Gaussians: $p_s^{(l)} = \mathcal{N}(\mu_{s}^{(l)},\sigma_{s}^{(l)})$, $p_t^{(l)} = \mathcal{N}(\mu_{t}^{(l)},\sigma_{t}^{(l)})$.
        \STATE Calculate the distribution divergence as: \\
        $D_{\text{KL}}^{(l)}(p_s^{(l)}, p_t^{(l)}) = 1/2 \cdot  p_s^{(l)}  \log\left({p_s^{(l)}}/{p_t^{(l)}}\right) + 1/2 \cdot p_t^{(l)} \log\left({p_t^{(l)}}/{p_s^{(l)} }\right)\;.$ 
    \ENDFOR
    
\STATE\textcolor{gray}{// Stage 2: Obtain the relative divergence $\mathcal{A}$.}
    \STATE Normalization the divergence over $L$ layers, Clip the value to [-1,1], and scale to [0,1]:\\
 $\mathcal{D} = [D^{(1)}_{\text{KL}},D^{(2)}_{\text{KL}},...,D^{(L)}_{\text{KL}}]\;.$ \\
 $\mathcal{A} = \gamma \cdot ({\texttt{Clip}(\texttt{Norm}(\mathcal{D})+1})/{2}\;.$ 
 \textcolor{gray}{// $\gamma$ is set as $1/2$ for emphasizing the target part.}
    
\STATE\textcolor{gray}{// Stage 3: Prediction.}
    \STATE Combine the normalization statistics with  $\alpha =  \mathcal{A}$ and obtain final predictions.
    
\STATE\textcolor{gray}{// Stage 4: Update the global prior $\mathcal{A}^{'}_{T+1}$.}
    \STATE $\mathcal{A}^{'}_{T+1} = \tau\cdot\mathcal{A} + (1-\tau)\cdot\mathcal{A}^{'}_{T}\;. $\textcolor{gray}{// $\tau$ is set as $0.1$.}
  \STATE  $T += 1$.
\ENDWHILE
\end{algorithmic}
\label{alg}
\end{algorithm}

\subsection{Compared Methods}
We compare the following state-of-the-art training-free (TF) methods. TBN~\cite{nado2020evaluating} re-estimates the batch normalization statistics from the test
data. $\alpha$-BN~\cite{you2021test}  combines the source and the test batch statistics with a pre-defined hyperparameter. AdaptiveBN~\cite{schneider2020adaptBN} proposes to reduce the covariate shift  with the interpretation for normalization statistics of a $N$ pseudo sample size  for samples from the training set. Different from them, LAME~\cite{lame} shifted the focus from the model’s parameters to the  output probabilities via  laplacian adjusted maximum-likelihood estimation.

Moreover, we also include recent training-required (TR) methods for reference.
% TENT~\cite{wang2021tent}  optimizes the batch normalization layers by minimizing the entropy of model predictions,
% CoTTA~\cite{wang2022continual} studies long-term test-time adaptation in continually changing environments. 
% ETA~\cite{niu2022efficient} and SAR~\cite{niu2022towards} exclude unreliable and redundant samples from the optimization.  
% AdaContrast~\cite{chen2022contrastive} resorts to contrastive learning to promote feature learning along with a pseudo-label refinement mechanism. 
% Both AdaContrast and CoTTA utilize heavy data augmentation during inference. 
TENT~\cite{wang2021tent} focuses on optimizing batch normalization by minimizing the entropy of the model's predictions. The study of CoTTA~\cite{wang2022continual} delves into long-term test-time adaptation in environments that continually change. Methods like ETA~\cite{niu2022efficient} and SAR~\cite{niu2022towards} are geared towards excluding samples that are deemed unreliable and redundant from the optimization process. AdaContrast~\cite{chen2022contrastive} leverages contrastive learning to enhance feature learning, incorporating a mechanism for refining pseudo-labels. 

\subsection{Results}
Table~\ref{continual_cifar},\ref{imagenet} show the error rates on three corruption benchmark datasets under a continual evaluation setting.
In Table~\ref{mixed}, we report the error rates on CIFAR-10-C under mixed-domain adaptation and gradual changing shifts, respectively.

\noindent\textbf{Our method consistently outperforms other approaches.}
As presented in Table~\ref{continual_cifar},\ref{imagenet}, our method shows a robust and superior performance on all corruption benchmark datasets. The results highlight the consistency of our method in delivering the lowest error rates. Notably, on the CIFAR-10-C dataset, our approach achieved an average error rate of 20.82\% across all batch sizes. This is remarkably lower compared to the next-best performing method, AdaptBN, with an average error rate of 26.59\%.

\noindent\textbf{Our method maintains a high level of performance even with the reduction of batch size.}
The robustness of our method is further illustrated by the minimal increase in error rates as the batch size decreases. For example, on the CIFAR-10-C dataset, the error rate increases marginally from 20.20\% with a batch size of 200 to 21.05\% with a batch size of 1 (Table~\ref{continual_cifar}). 
While for TENT, CoTTA and other training-required methods, this degradation
is significant due to the dependence on the inaccurate test batch statistics. Note that compared with $\alpha$-BN and AdaptBN which leverage large source statistics (80\% or more under mini-batches),  their performance are limited with mini-batches though they do avoid degradation.

\noindent\textbf{Our method demonstrates remarkable stability across different corruption levels and various practical scenarios.}
The results from Table~\ref{mixed} underscore our method's robustness in diverse (mixed domain) and changing (gradual) environments. Our method still stands out with its strong performance, especially under small batch sizes – conditions that can challenge many methods. While other methods tend to display considerable fluctuations in error rates across different corruption levels (especially for the optimization-based methods), our approach manages to maintain relatively steady performance.
The solid empirical results validate the proposed approach's robustness and adaptability, highlighting its potential value for real-world deployment where these types of changes are commonplace.

% \begin{figure*}[t]
%   \centering
%   \includegraphics[clip,trim=9.5cm 21cm 5cm 19cm,width=0.95\textwidth]
%   % {AnonymousSubmission/LaTeX/momentum_3.pdf}
%   {momentum_3.pdf}
%    \caption{Momentum analysis for TEMA on three benchmarks under continual setting with different test batch size. Red, blue and  grey regions represent the calculated part where momentum should be set to $m=1.0,0.1,0.01$ according to Eq.~(\ref{objective}). Lines plot the experimental performance of TEMA(m=1.0)/TBN, TEMA(m=0.1), and TEMA(m=0.01). }
% \label{fig:momentum}
% \end{figure*}

\begin{figure*}[t]
\centering
  \begin{subfigure}{0.3\textwidth}
  
    \includegraphics[width=\linewidth]{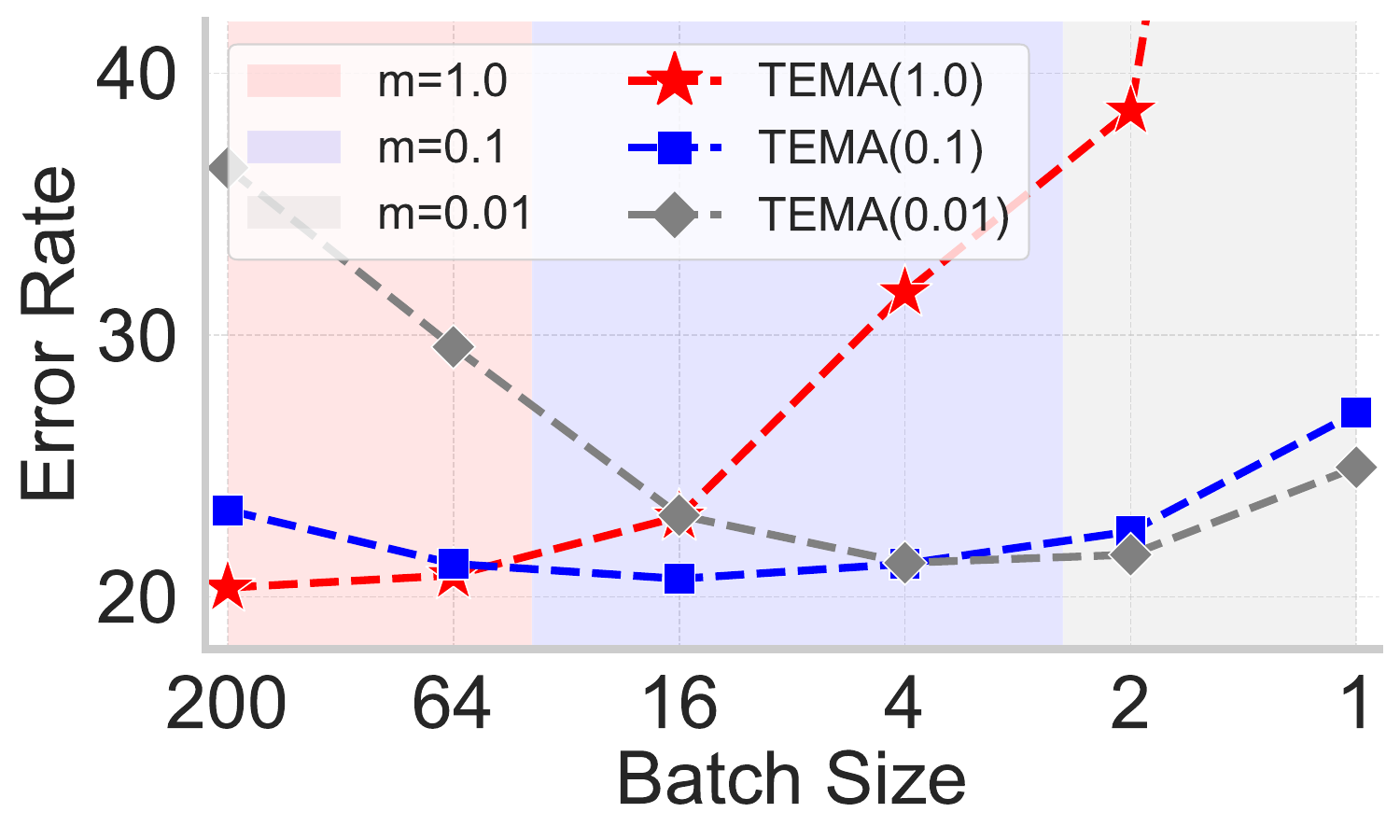}
    \caption{CIFAR-10-C}
    
  \end{subfigure}
  \begin{subfigure}{0.3\textwidth}
  
    \includegraphics[width=\linewidth]{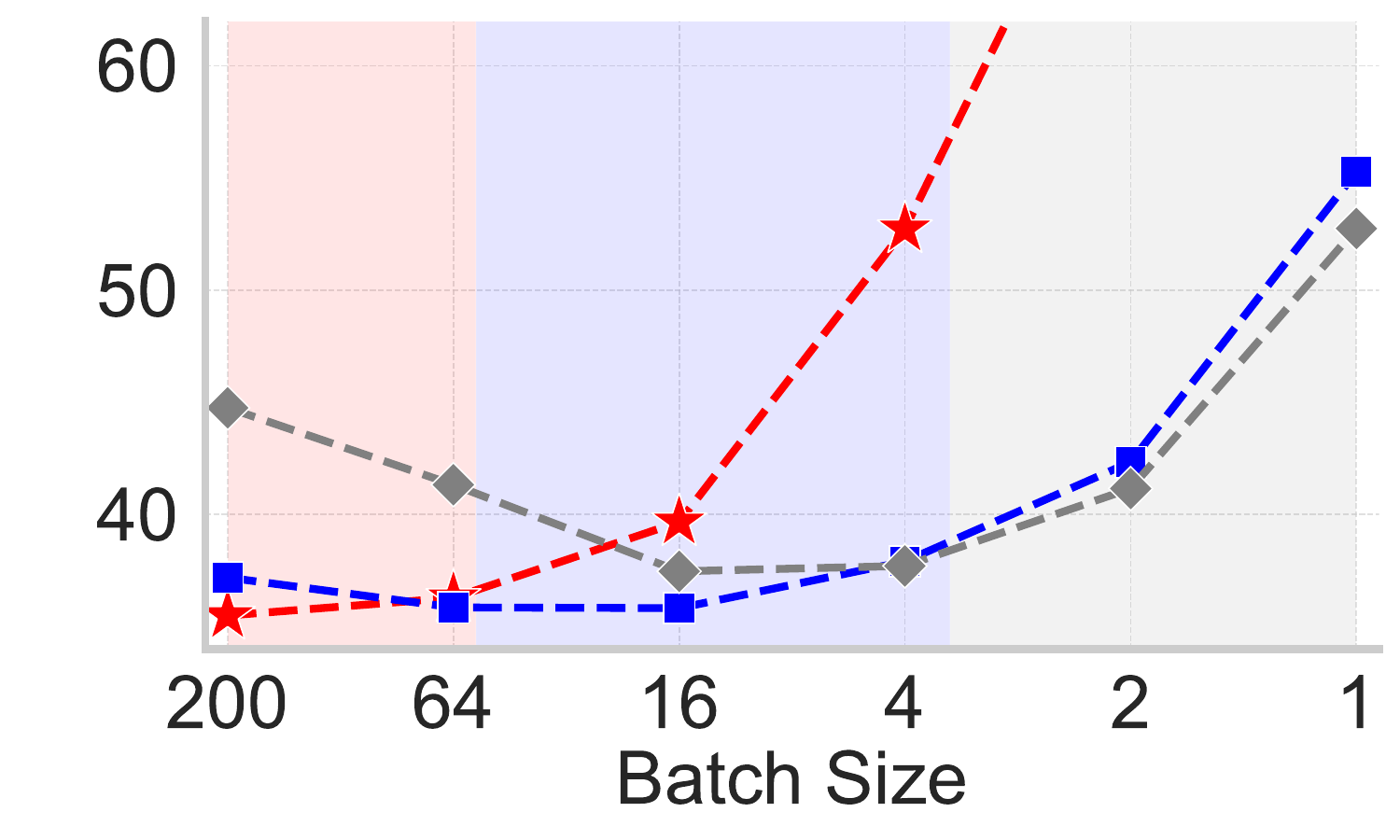}
    \caption{CIFAR-100-C}
    
  \end{subfigure}
  \begin{subfigure}{0.3\textwidth}
      
    \includegraphics[width=\linewidth]{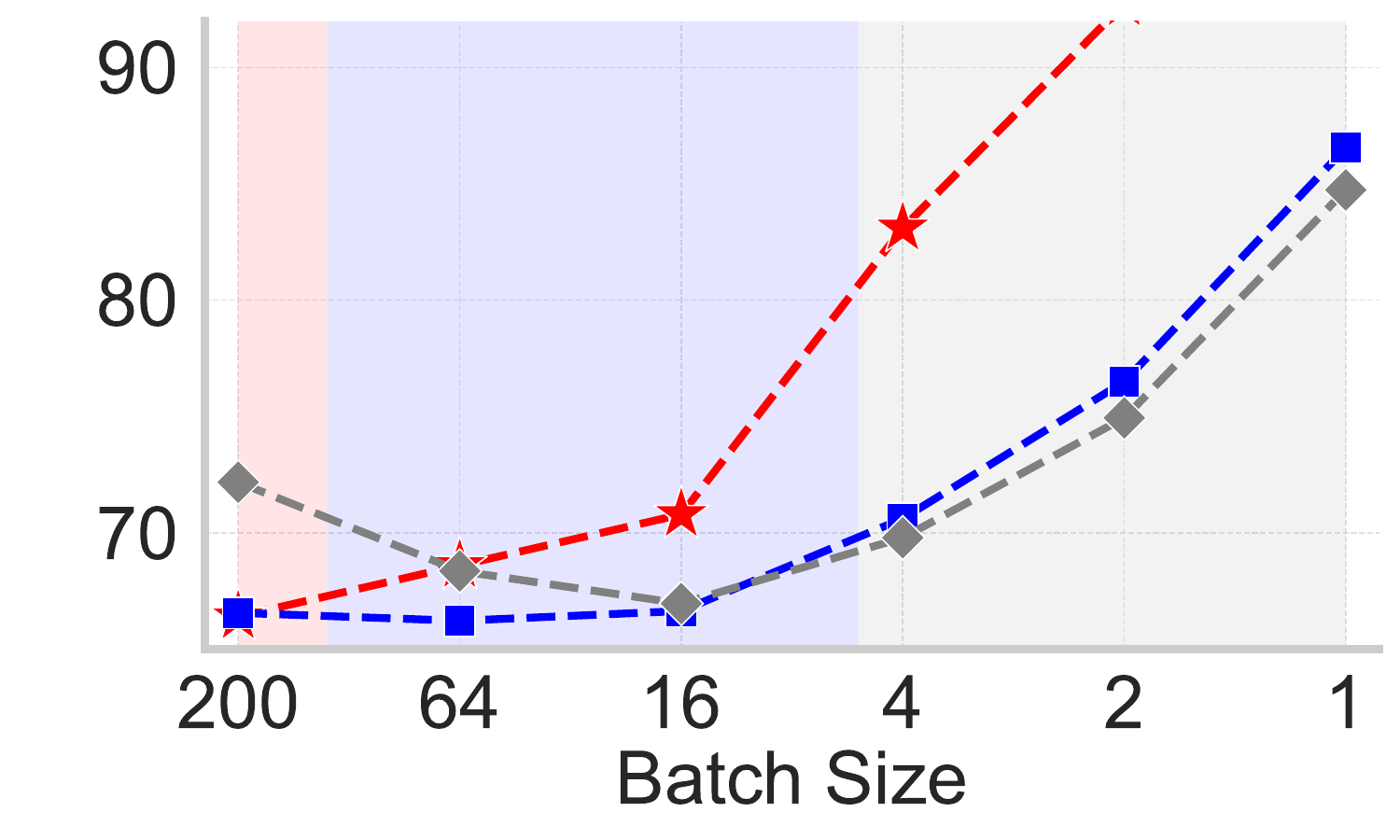}
    \caption{IMAGENET-C}

  \end{subfigure}
  \caption{Momentum analysis for TEMA on three benchmarks under continual setting with different test batch size. Red, blue and  grey regions represent the calculated part where momentum should be set to $m=1.0,0.1,0.01$ according to Eq.~(\ref{objective}). Lines plot the experimental performance of TEMA(m=1.0)/TBN, TEMA(m=0.1), and TEMA(m=0.01).}
\label{fig:momentum}
\end{figure*}

\subsection{Momentum Analysis for TEMA}
We conduct experiments using benchmark datasets to evaluate the proposed adaptive momentum strategy. The process is visualized in Figure~\ref{fig:momentum}. 
Note that the intersections of the lines do not imply the two methods exhibit identical performance under this configuration. Instead, they represent trends for different methods, serving to illustrate our estimations correspond to the actual performance observed under the tested batch size.
% The lines 
The results not only showcase the congruence between our estimated values and empirical performance but also highlight our achievement in effectively balancing the class diversity and timeliness of target statistics.

\begin{table}[t]
\renewcommand{\arraystretch}{1.0} % 设置行间距
\setlength{\tabcolsep}{3pt} 
\begin{small}
\begin{tabular}{c|l|rrrllrl}
\hline
                              & \multicolumn{1}{c|}{}                         & \multicolumn{7}{c}{Test batch size}                                                                                                                                                                                                                                                                                                                                                                                                                                      \\ \cline{3-9} 
\multirow{-2}{*}{}     & \multicolumn{1}{c|}{\multirow{-2}{*}{Method}} & \multicolumn{1}{c}{200}                           & \multicolumn{1}{c}{64}                                     & \multicolumn{1}{c}{16}                                     & \multicolumn{1}{c}{4}                                                    & \multicolumn{1}{c}{2}                                                             & \multicolumn{1}{c|}{1}                                                             & \multicolumn{1}{c}{Avg.}               \\ \hline\rule{0pt}{8pt}
       & \cellcolor[HTML]{FFFFFF}Baseline                                         & \cellcolor[HTML]{FFFFFF}43.50                     & \cellcolor[HTML]{FFFFFF}43.50                              & \cellcolor[HTML]{FFFFFF}43.50                              & \multicolumn{1}{r}{\cellcolor[HTML]{FFFFFF}43.50}                        & \multicolumn{1}{r}{\cellcolor[HTML]{FFFFFF}38.57}                                 & \multicolumn{1}{r|}{\cellcolor[HTML]{FFFFFF}43.50}                                 & \cellcolor[HTML]{FFFFFF}43.50          \\\cline{2-9} 
                              & \cellcolor[HTML]{FFFFFF}TBN                                          & \cellcolor[HTML]{FFFFFF}20.35                     & \cellcolor[HTML]{FFFFFF}20.82                              & \cellcolor[HTML]{FFFFFF}23.06                              & \multicolumn{1}{r}{\cellcolor[HTML]{FFFFFF}31.62}                        & \multicolumn{1}{r}{\cellcolor[HTML]{FFFFFF}38.57}                                 & \multicolumn{1}{r|}{\cellcolor[HTML]{FFFFFF}89.83}                                 & \cellcolor[HTML]{FFFFFF}37.38          \\

                              & TEMA                                          & \cellcolor[HTML]{FFFFFF}20.35                     & \cellcolor[HTML]{FFFFFF}20.82                              & \cellcolor[HTML]{FFFFFF}\textbf{20.69}                              & \multicolumn{1}{r}{\cellcolor[HTML]{FFFFFF}{\color[HTML]{000000} \textbf{21.26}}} & \multicolumn{1}{r}{\cellcolor[HTML]{FFFFFF}{\color[HTML]{000000} 21.61}}          & \multicolumn{1}{r|}{\cellcolor[HTML]{FFFFFF}{\color[HTML]{000000} 24.95}}          & \cellcolor[HTML]{FFFFFF}21.61          \\
\multirow{-4}{*}{1}  & \cellcolor[HTML]{EFEFEF}Ours               & \cellcolor[HTML]{EFEFEF}\textbf{20.20}            & \multicolumn{1}{l}{\cellcolor[HTML]{EFEFEF}\textbf{20.57}} & \multicolumn{1}{l}{\cellcolor[HTML]{EFEFEF}{20.74}} & \cellcolor[HTML]{EFEFEF}{\color[HTML]{000000}{21.45}}            & \multicolumn{1}{r}{\cellcolor[HTML]{EFEFEF}{\color[HTML]{000000} \textbf{20.91}}} & \multicolumn{1}{l|}{\cellcolor[HTML]{EFEFEF}{\color[HTML]{000000} \textbf{21.05}}} & \cellcolor[HTML]{EFEFEF}\textbf{20.82} \\ \hline\rule{0pt}{8pt}
& \cellcolor[HTML]{FFFFFF}Baseline                                         & \cellcolor[HTML]{FFFFFF}46.45                     & \cellcolor[HTML]{FFFFFF}46.45                              & \cellcolor[HTML]{FFFFFF}46.45                                & \multicolumn{1}{r}{\cellcolor[HTML]{FFFFFF}46.45  }                        & \multicolumn{1}{r}{\cellcolor[HTML]{FFFFFF}46.45  }                                 & \multicolumn{1}{r|}{\cellcolor[HTML]{FFFFFF}46.45  }                                 & \cellcolor[HTML]{FFFFFF}46.45          \\\cline{2-9} 
                              & \cellcolor[HTML]{FFFFFF}TBN                                           & \cellcolor[HTML]{FFFFFF}35.50                     & \multicolumn{1}{l}{\cellcolor[HTML]{FFFFFF}36.29}          & \multicolumn{1}{l}{\cellcolor[HTML]{FFFFFF}39.67}          & \cellcolor[HTML]{FFFFFF}{\color[HTML]{000000} 52.73}                     & \cellcolor[HTML]{FFFFFF}{\color[HTML]{000000} 73.24}                              & \multicolumn{1}{r|}{\cellcolor[HTML]{FFFFFF}{\color[HTML]{000000} 98.91}}          & \cellcolor[HTML]{FFFFFF}56.06          \\
                              & TEMA                                          & \cellcolor[HTML]{FFFFFF}35.50                     & \multicolumn{1}{l}{\cellcolor[HTML]{FFFFFF}\textbf{35.86}}          & \multicolumn{1}{l}{\cellcolor[HTML]{FFFFFF}35.83}          & \cellcolor[HTML]{FFFFFF}{\color[HTML]{000000} 37.71}                     & \cellcolor[HTML]{FFFFFF}{\color[HTML]{000000} 41.16}                              & \multicolumn{1}{r|}{\cellcolor[HTML]{FFFFFF}{\color[HTML]{000000} 52.75}}          & \cellcolor[HTML]{FFFFFF}39.80          \\
\multirow{-4}{*}{{2}} & \cellcolor[HTML]{EFEFEF}Ours               & \cellcolor[HTML]{EFEFEF}\textbf{34.63}            & \multicolumn{1}{l}{\cellcolor[HTML]{EFEFEF}{36.11}} & \multicolumn{1}{l}{\cellcolor[HTML]{EFEFEF}\textbf{35.31}} & \cellcolor[HTML]{EFEFEF}{\color[HTML]{000000} \textbf{36.02}}            & \multicolumn{1}{r}{\cellcolor[HTML]{EFEFEF}{\color[HTML]{000000} \textbf{36.32}}} & \multicolumn{1}{l|}{\cellcolor[HTML]{EFEFEF}{\color[HTML]{000000} \textbf{39.30}}}  & \cellcolor[HTML]{EFEFEF}\textbf{36.28} \\ \hline\rule{0pt}{8pt}
& \cellcolor[HTML]{FFFFFF}Baseline                                         & \cellcolor[HTML]{FFFFFF}82.00                     & \cellcolor[HTML]{FFFFFF}82.00                                & \cellcolor[HTML]{FFFFFF}82.00                                & \multicolumn{1}{r}{\cellcolor[HTML]{FFFFFF}82.00  }                        & \multicolumn{1}{r}{\cellcolor[HTML]{FFFFFF}82.00  }                                 & \multicolumn{1}{r|}{\cellcolor[HTML]{FFFFFF}82.00  }                                 & \cellcolor[HTML]{FFFFFF}82.00           \\\cline{2-9} 
                              & \cellcolor[HTML]{FFFFFF}TBN                                           & \cellcolor[HTML]{FFFFFF}66.40                     & \cellcolor[HTML]{FFFFFF}68.60                              & \cellcolor[HTML]{FFFFFF}70.79                              & \cellcolor[HTML]{FFFFFF}83.10                                            & \cellcolor[HTML]{FFFFFF}92.74                                                       & \multicolumn{1}{l|}{\cellcolor[HTML]{FFFFFF}99.22}                                 & \cellcolor[HTML]{FFFFFF}82.89          \\
                              & TEMA                                          & \cellcolor[HTML]{FFFFFF}66.40                     & \cellcolor[HTML]{FFFFFF}66.22                              & \cellcolor[HTML]{FFFFFF}66.64                              & \cellcolor[HTML]{FFFFFF}69.79                                            & \cellcolor[HTML]{FFFFFF}74.94                                                     & \multicolumn{1}{l|}{\cellcolor[HTML]{FFFFFF}84.74}                                 & \cellcolor[HTML]{FFFFFF}72.47          \\
\multirow{-4}{*}{3}  & \cellcolor[HTML]{EFEFEF}Ours               & \cellcolor[HTML]{EFEFEF}\textbf{64.05}            & \multicolumn{1}{l}{\cellcolor[HTML]{EFEFEF}\textbf{64.15}} & \multicolumn{1}{l}{\cellcolor[HTML]{EFEFEF}\textbf{64.90}} & \multicolumn{1}{r}{\cellcolor[HTML]{EFEFEF}\textbf{64.87}}               & \cellcolor[HTML]{EFEFEF}\textbf{66.01}                                            & \multicolumn{1}{l|}{\cellcolor[HTML]{EFEFEF}\textbf{68.79}}                        & \cellcolor[HTML]{EFEFEF}\textbf{65.46} \\ \hline
\end{tabular}
\end{small}
\caption{Ablation study on three corruption benchmarks (1 $\rightarrow$ CIFAR-10-C, 2 $\rightarrow$ CIFAR-100-C, 3 $\rightarrow$ ImageNet-C). }
% Error rate ($\downarrow$) averaged over 15
% corruptions with severity level 5 for each test batch size.
\label{tab:abla}
\end{table}

\subsection{Ablation Study}\label{sec:abla}
We conduct an ablation study on the importance of the proposed method  under the continual setting. The results are shown in Table~\ref{tab:abla}. 
The Baseline model displays consistent error rates across various test batch sizes. However, the introduction of target normalization statistics (TBN/TEMA) to the model significantly reduced the error rates across all test batch sizes. While with TEMA, an optimal selection of the momentum for target statistics, this component yields robust improvements over all batch sizes, reaching the lowest error rate. Finally, the addition of Layer-wise Rectification Strategy (Ours) leads to the best performance across all batch sizes, showcasing the benefit of a layer-wise normalization rectification strategy for better generalization ability. 

\section{Conclusions and Limitations}
% \section{Conclusions}
This paper 
% presents an examination of 
examines
test-time degradation by unraveling batch normalization, 
identifying the reduced class diversity in batches as the key issue.
% revealing that the key issue lies in the substantially reduced class diversity within batches. 
To mitigate this problem and promote test-time performance, 
% we introduce TEMA with adaptive data scope to address class diversity discrepancies between training and testing and propose a novel layer-wise rectification strategy using inter-domain divergence to harmonize source and target statistics.
we 1) introduce TEMA with adaptive data scope, designed to bridge the class diversity gap between training and testing, and 2) propose a novel layer-wise rectification strategy, calibrated using inter-domain divergence to harmonize source and target statistics.
Experiments in diverse real-world scenarios demonstrate the superiority of our method in comparison with the state-of-the-arts. While our proposed strategy shows effectiveness in most scenarios, it presumes samples to be i.i.d, which may not apply to all circumstances. Future work could involve extending the exploration to non-i.i.d settings.
% While our proposed strategy shows effectiveness in most scenarios, it presumes samples to be i.i.d, an assumption that may not apply in all circumstances. Further work could involve extending the exploration to non-i.i.d settings.
% \newpage

\section{Appendix}

\subsection{Proof of Proposition 1}
\begin{proof}
We begin with the premise of an i.i.d. sample space with $K$ equally probable categories.
% and seek the expected number of distinct categories in a batch of $N$ samples.
For any given $k < K$, the combinations $\mathbf{C}_{K}^{k}$ represent the ways to select $k$ categories from $K$. After choosing $k$ classes, the arrangements where each category gets at least one sample in a batch of $N$ samples are counted by $\mathbf{C}_{N-1}^{k-1}$,
% , akin to putting $m-1$ dividers among $N-1$ remaining samples. 
and the total number of ways to assign $N$ samples into $1, 2, \ldots , K$ classes is $\mathbf{C}_{N+K-1}^{K-1}$. As such, the probability of having exactly $k$ unique categories in a batch is the ratio of these combinatory figures. Finally, we can derive the expected category diversity 
% $E(M|N)$
by summing all unique categories $k$ times their respective probabilities as
$E(M|N) = \sum_{k=1}^{K} \left[ k \times \frac{\mathbf{C}_{N-1}^{k-1}\mathbf{C}_{K}^{k}}{\mathbf{C}_{N+K-1}^{K-1}} \right].$
\end{proof}

\subsection{Flowchart of Layer-wise Rectification Strategy}
We present a chart in Figure~\ref{fig:alphaupdate} to visually illustrate the procedure of Algorithm 1.  

 \begin{figure}[h]
  \centering
\includegraphics[width=0.47\textwidth]{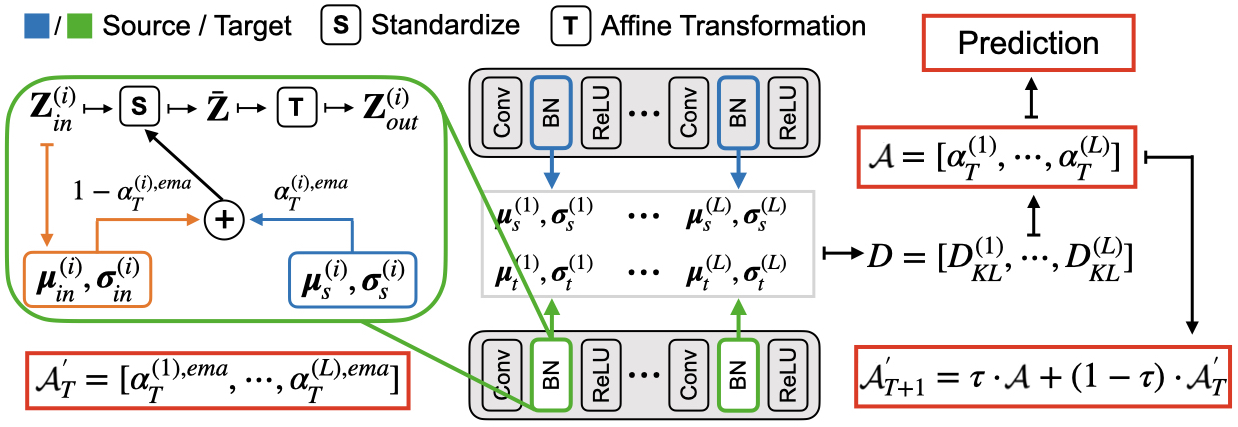}
\caption{Flowchart of Layer-wise Rectification Strategy.}
  \label{fig:alphaupdate}
\end{figure}

\subsection{Model progression with data expansion}
We also examine the relationship between real-time model performance and the volume of incoming data in Figure~\ref{fig:stability} and observe that the adaptation outcome may not be satisfactory during the initial phase. This likely stems from TEMA's early struggles with target distribution estimation, thereby undermining the calibration parameter $\alpha$ and ultimate model performance.
% We believe the root cause lies in that initial TEMA phase struggles to accurately describe the target distribution due to initialization issues, as it consequently invalidate the parameter $\alpha$ used to describe the inter-domain discrepancy, ultimately resulting in poor performance.
This observation aligns with our stance in the main text that the limited class diversity (in initial TEMA phase) would compromise model performance. Given our primary focus on online Test-Time Adaptation, the issue of total data volume did not initially receive significant attention. We will delve deeper into this problem in future work.
\begin{figure}[h]
  \centering
\includegraphics[width=0.45\textwidth]{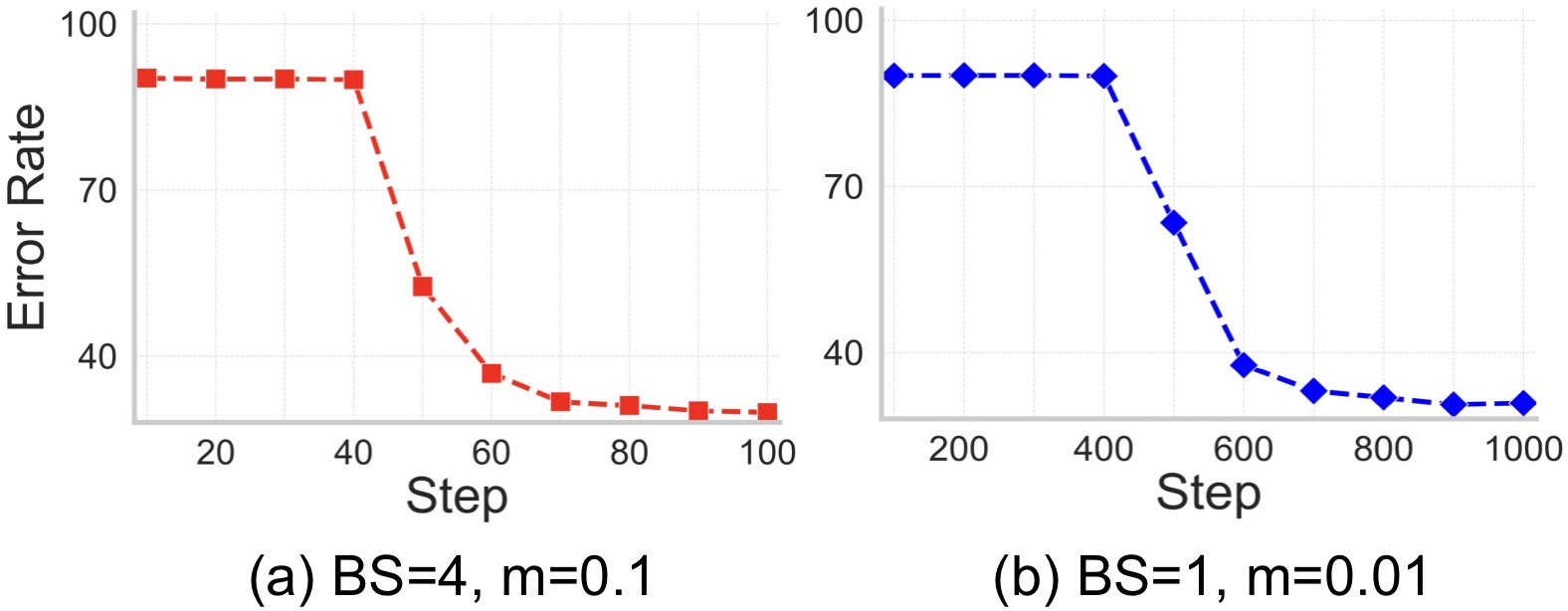}
\caption{Real-time performance on CIFAR-10-C (Gaussian noise). Error rate remains stable for steps after.}
  \label{fig:stability}
\end{figure}

\section*{Acknowledgements} 
This work was supported by: National Natural Science Foundation of China under No. 92370119, No. 62376113,No. 62276258, and No. 62206225; Jiangsu Science and Technology Program (Natural Science Foundation of Jiangsu Province) under No.  BE2020006-4;
Natural Science Foundation of the Jiangsu Higher Education Institutions of China under No. 22KJB520039.

% \clearpage
\bibliography{aaai24}

\end{document}